\documentclass{article}

\usepackage{PRIMEarxiv}

\usepackage[utf8]{inputenc} 
\usepackage[T1]{fontenc}    
\usepackage{hyperref}       
\usepackage{url}            
\usepackage{booktabs}       
\usepackage{amsfonts}       
\usepackage{nicefrac}       
\usepackage{microtype}      
\usepackage{lipsum}
\usepackage{fancyhdr}       
\usepackage{graphicx}
\usepackage{ulem}
\usepackage{subcaption}
\usepackage{amsthm}
\usepackage{appendix}
\usepackage{algorithmic}
\usepackage{algorithm}

\graphicspath{{media/}}     

\title{Revolutionizing Long-Term Memory in AI: New Horizons with High-Capacity and High-Speed Storage
}

\author{
  Hiroaki Yamanaka, Daisuke Miyashita, Takashi Toi, Asuka Maki, Taiga Ikeda, Jun Deguchi \\
  AI \& System Research Center,  
  Kioxia Corporation, Yokohama, Japan\\
}

\begin{document}
\maketitle

\begin{abstract}
Driven by our mission of ``uplifting the world with memory,'' this paper explores the design concept of ``memory'' that is essential for achieving artificial superintelligence (ASI).
Rather than proposing novel methods, we focus on several alternative approaches whose potential benefits are widely imaginable, yet have remained largely unexplored. 
The currently dominant paradigm, which can be termed ``extract then store,'' involves extracting information judged to be useful from experiences and saving only the extracted content.
However, this approach inherently risks the loss of information, as some valuable knowledge particularly for different tasks may be discarded in the extraction process.
In contrast, we emphasize the ``store then on-demand extract'' approach, which seeks to retain raw experiences and flexibly apply them to various tasks as needed, thus avoiding such information loss.
In addition, we highlight two further approaches: discovering deeper insights from large collections of probabilistic experiences, and improving experience collection efficiency by sharing stored experiences.
While these approaches seem intuitively effective, our simple experiments demonstrate that this is indeed the case.
Finally, we discuss major challenges that have limited investigation into these promising directions and propose research topics to address them.
\end{abstract}


\section{Introduction}
Memory is the process of storing and retrieving information, and is foundation for humans for growth and effective interaction with the world. 
Also for AI agents, memory plays a crucial role toward artificial general intelligence (AGI) that autonomously explores and learns from the real world. 
While parametric memory of original large language models (LLMs) is static, incorporating LLMs with external memory is a common way to enhance self-evolving capability of AI agents~\cite{zhang_survey_2024,wu_human_2025,gao_survey_2025}. 
External memory retains the agent's experiences, including any received information. Retrieval from the memory allows agents to recall useful information beyond parametric memory and a current task context, and to make more accurate and intelligent decisions.

One of primary applications of self-evolving AI agent memory is personalization that adjusts behaviors and responses of agents to elevate user experiences. 
Past conversation history including a profile and preference of the user is retrieved from memory and the retrieved information is provided to prompts to be considered in reasoning by LLM. 
For instance, ChatGPT Memory~\cite{openai_memory_2024} stores conversation history with a user to provide personalized answers. 
SecondMe~\cite{wei_ai-native_2025} stores user's personal information to create an LLM-powered faithful avatar.

Another prominent application of self-evolving AI agent memory is improving the ability to adapt to specific environments and tasks, as well as the ability to generalize across a wide range of environments and tasks over time.
Explored domains range from general-purpose domains, particularly versatile digital assistants, to specialized domains such as coding, graphical user interfaces, finance, medicine, and education~\cite{gao_survey_2025}. 
Among numerous examples, we provide only three specific cases.
ExpeL~\cite{zhao_expel_2024} is a text-based general-purpose agent that extracts common rules from trajectories representing interactions between the agent and the task environment and uses them to improve performance in future tasks.
MIRIX~\cite{wang_mirix_2025} proposes a comprehensive memory architecture to manage experience memory in six useful memory types such as episodic memory, semantic memory, while supporting visual and multimodal experiences.
RoboMemory~\cite{lei_robomemory_2025} extends human-inspired memory mechanisms for physical AI through a hierarchical and parallel memory architecture that supports long-horizon planning and interactive environmental learning.
A common feature across these domains is learning from experiences. 

In addition, we emphasize another common design concept shared by most existing methods: the ``extract then store'' approach, in which useful information is extracted from experiences and retained as memory.
Typically, usefulness is evaluated with respect to the task being addressed when the experiences are acquired.
Consequently, information not deemed useful is inevitably discarded and not stored as memory.
Biological brains, such as those of humans, exhibit the ability known as latent learning.
For example, in a study of rats exploring mazes~\cite{tolman_cognitive_1948}, rats can acquire the locations of food and water while wandering the maze without specific motivation to find them. Later, when hungry or thirsty, they are able to recall these locations.
This capability is unattainable under the ``extract then store'' paradigm.

Extracting and storing everything that might be useful from each experience leads to unnecessary redundancy.
A natural alternative is to retain raw experiences as memory and extract relevant information on demand according to the current task.
We refer to this approach as ``Store Then ON-demand Extract: STONE'' paradigm.
While it is intuitive that storing raw experiences via STONE minimizes information loss, most prior methods do not adopt this paradigm.
This is largely due to significant challenges, such as increased memory requirements, difficulty in retrieving relevant information, a long response time for user's requests, etc.
Furthermore, commonly used AI benchmarks rarely require leveraging past experiences in contexts different from those in which they were acquired, reducing the incentive to use such experiences later.

\begin{figure}[t]
  \begin{center}
    \includegraphics[scale=.5]{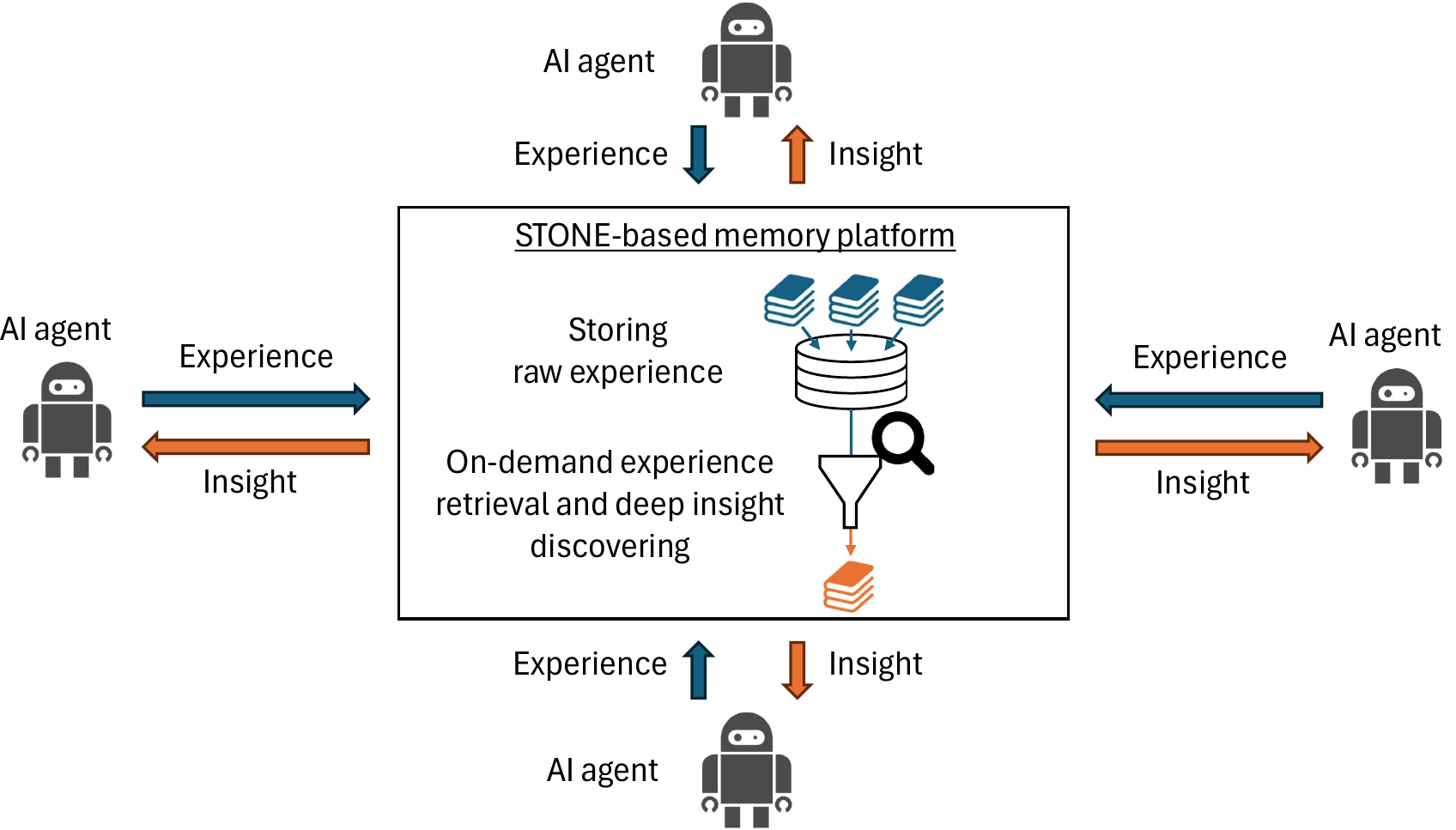}
  \end{center}
  \caption{STONE-based memory platform.} \label{fig:stoe-all}
\end{figure}

As a brief aside, even though humans possess latent learning capabilities, they may also employ an ``extract then store'' strategy, speculatively selecting which information to retain and accepting potential loss of useful details.
It may appear somewhat ironic, but imitating this approach can be advantageous for efficiency and may be appropriate when aiming for human-like AI with similarly uncertain memory.
However, technological advances such as increased computational speed and energy efficiency enable AI to overcome such limitations; thus, pursuing strategies distinct from those of humans may be essential for achieving artificial superintelligence (ASI).

Providing LLMs with information extracted from relevant experiences (e.g., task episodes) can be effective, but only when the information is consistent. 
In a stochastic environment, however, performing the same action in the same state may lead to different outcomes. 
In such cases, the information extracted from experiences lacks consistency.
Such probabilistic settings are common in real-world applications, yet most benchmarks for LLM-based agents assume deterministic environments (e.g.,~\cite{zhao_expel_2024}). 
When relevant information is inconsistent, inter-context conflict~\cite{xu_knowledge_conflict_survey_2024} may arise, leading to unpredictable LLM outputs and reducing the effectiveness of experience utilization.
For instance, if the LLM receives conflicting information A and B as context, it may base its output on A, B, or neither. 
We argue that it is not appropriate to reflect a single piece of information extracted from experiences in an agent's behavior with an effective learning rate close to 1 (as in in-context learning, ICL~\cite{brown_language_2020}).
Instead, the information provided to the LLM should be extracted through a statistical process from multiple relevant experiences.
We refer to this process as ``deeper insight discovery.''

The more experiences are collected, the more useful insights are obtained. Thus, a large amount of experiences needs to be collected. However, experience collection significantly depends on individual agents' trial and error while collected experiences are siloed within an individual agent. The agents are put under burden of trial and error, i.e., many task executions are required to obtain effective insights. We advocate that sharing experiences of multiple agents reduces per agent cost of trial and error.

In this paper, we highlight (i) \textbf{Store Then ON-demand Extract: STONE}, (ii) \textbf{deeper insight discovery}, and (iii) \textbf{experience memory sharing} whose potential benefits are widely imaginable, yet have remained largely unexplored.
These design concepts are summarized in Figure~\ref{fig:stoe-all}.
The remainder of the paper is organized as follows. Section~\ref{sec:related_works} reviews related work on these three approaches. Section~\ref{sec:formalize} formalizes the highlighted approaches. Section~\ref{sec:experiments} presents simple experiments demonstrating their effectiveness. Section~\ref{sec:challenges} discusses the challenges hindering wider adoption and suggests research topics required to address them. Finally, Section~\ref{sec:conclusion} concludes the paper.

\section{Related works} \label{sec:related_works}
\subsection{Prerequisites}
In advance to describing closely related work in the following Sections~\ref{sec:sub-multifaceted}--\ref{sec:sub-expcollect}, this subsection clarifies our focused dimensions of AI memory technology: non-parametric memory and inter-inference memory use.

\subsubsection{Non-parametric (vs. parametric) memory}
One of major memory types of AI agents is parametric memory in LLMs, whereas our focus is non-parametric memory. 
Although parametric memory can be updated by fine-tuning (e.g., LoRA~\cite{hu_lora_2022}) and provide substantial inference performance gains, parametric memory is unsuitable for learning from experience. 
If we aim to enable agents to learn continuously and on the fly, frequent and prompt parameter updates would be required, which would demand impractical computational cost and speed.
In addition, on-the-fly learning from experience is also impractical because parameter updating requires a significant amount of time.
Besides, some studies~\cite{berglund_reversal_curse_2024, andrew_icl_generalization_2025, lampinen_latent_2025} report that flexible use of parametric memory is limited because information embedded in parameters depends on the task and format used during learning.
For example, reversing a relation in the embedded information (e.g., answering the question ``Who was Aristotle's teacher?'' when the embedded information is ``Plato taught Aristotle.'') is difficult.

We focus on non-parametric memory, in which obtained information is stored in external storage. When the memory is used, necessary information in the external storage is injected into prompts as \textit{context}. 
RAG (retrieval-augmented generation)~\cite{lewis_retrieval-augmented_2021}, one of major applications of non-parametric memory, injects information related to the task from external knowledge base, to improve accuracy of LLM outputs. 
ICL~\cite{brown_language_2020} represents another common use case: instructional prompts, typically consisting of few-shot examples or demonstrations such as input-output pairs or reasoning traces, are inserted into the context to enable the LLM to adapt its parametric knowledge to the format of the target task.

\subsubsection{Inter-inference (vs. Intra-inference) memory use}
Another featured dimension of our approach is inter-inference memory use rather than intra-inference memory use. 
Intra-inference memory use mainly aims to understand long context within a single inference. 
As human brain organizes vast amount of episodic experiences in memory and efficiently retrieves relevant information across time, EM-LLM~\cite{fountas_human-inspired_2025} organizes input tokens into episodic events identified Bayesian surprise and retrieves relevant information from the event memory in the following inference process. 
Titans~\cite{behrouz_titans_2024} proposes neural long-term memory module that is updated during an inference according to surprise metric of input information defined by a loss function. 
In addition, the architecture supports forgetting mechanism. The architecture efficiently handles super long-context.

Our approach falls into inter-inference memory use because aiming to use the same experience memory across various tasks and environments, i.e., multiple inferences.
The main objective of inter-inference memory use is to learn from past experience to do tasks well in the future.
Most state-of-the-art techniques of this approach (e.g., ExpeL~\cite{zhao_expel_2024}, MIRIX~\cite{wang_mirix_2025}, RoboMemory~\cite{lei_robomemory_2025}, and ACE~\cite{zhang_agentic_2025}) distill knowledge and support retrieval mechanisms of experience memory, to encourage taking the same action in successful experience or avoiding the same action in failed experience. 
In the techniques, relevant information (i.e., non-parametric memory) is retrieved from experience memory and injected into prompts during inferences. 

\subsection{Cross-task and multifaceted use of experiences} \label{sec:sub-multifaceted}

The objective of STONE is to preserve experiences as complete memories, thereby minimizing information loss and enabling flexible and multifaceted utilization of knowledge across tasks.
Some prior works explicitly address such cross-task memory usage.

For example, ExpeL~\cite{zhao_expel_2024} mentions the ``extraction/abstraction of cross-task knowledge from these experiences'' and stores both success trajectories (raw experiences) and insights as memory.
However, in their design, success trajectories are used primarily as in-context few-shot examples to help LLMs interpret tasks, while only insights extracted based on the current task are leveraged for transferrable knowledge.
Consequently, information from experiences that is not explicitly related to the original task cannot be utilized in other tasks, even if it is potentially useful.

Similarly, FLEX~\cite{cai_flex_2025} emphasizes retaining cross-task experiences for future use in an experience library by aggregating acquired experiences. However, the library-updating algorithm is highly dependent on the task during which the experiences were obtained. Since the experiences themselves are not preserved and only the experience library is stored, FLEX cannot exhaustively utilize all the information embedded in the original experiences.

Although these methods aim to enable the use of experiences across tasks, they deliberately adopt an ``extract then store'' approach---which risks losing information that could be useful for other tasks---rather than the STONE paradigm.
This choice likely arises not from a lack of awareness about the STONE-like approach. Rather, it seems to stem from a deliberate avoidance, which implicitly suggests the inherent challenges associated with implementing the STONE paradigm.

On the other hand, GAM~\cite{yan_general_2025} is conceptually similar to STONE.
GAM regards memorization as a form of data compression, necessarily resulting in information loss.
Its method involves dividing LLM input into pages without any compression, storing them, and retrieving relevant pages during inference.
Information necessary for the current task is extracted via Just-in-Time Compilation, and answers are generated based on this extracted information.
However, their empirical evaluation relies on benchmarks such as LoCoMo~\cite{maharana_locomo_2024}, which are designed for intra-inference memory use, and their prompt design appears aligned with this setting.
The paper also omits any discussion of the scalability needed for inter-inference memory use, highlighting the broader lack of benchmarks suited to this scenario.

\subsection{Insight extraction} \label{sec:sub-insightdepth}
ExpeL~\cite{zhao_expel_2024}, AWM~\cite{wang_awm_2025}, and CER~\cite{liu_contextual_2025} enable learning from experience by extracting common patterns such as insights, skills, and dynamics from raw task trajectories and providing them to the LLM as context, while model parameters remain fixed. 
Furthermore, they include mechanisms to refine the database of common patterns through repeated trial and error.
However, in these works, the common patterns extracted from experience remain at a surface level, assuming only deterministic environments, and are insufficient for capturing the probability distributions of responses in stochastic environments.
By capturing the probability distributions and reflecting them in the agent's behavior, the usefulness of experience utilization is expected to increase; however, research in this area remains largely underexplored.

\subsection{Experience collection and sharing} \label{sec:sub-expcollect}
While most conventional methods for self-evolving AI agents collect experiences solely by individual agents, some methods enable sharing memory among multiple agents. 
In multi-agent collaboration scenarios, agents do subtasks of global tasks. 
For instance, RoboOS-Next~\cite{tan_roboos-next_2025} proposes shared memory for robots collaborating for global tasks. 
Spatio Temporal Embodiment Memory (STEM) including spatial scene geometry, temporal event history, and embodiment profiles of each agent is stored in the shared memory. 
A centralized brain agent coordinates the robots using information in the STEM to process global tasks.
In non-collaborating agent scenarios, agents share their experiences while doing individual tasks. 
MemOS~\cite{li_memos_2025}, which is a comprehensive agent memory management architecture, supports MemStore to exchange modularized knowledge among institutions and industrywide networks on a publish-subscribe mechanism. 
The exchanged knowledge is distilled from agents' experience for specific domains/tasks.
These memory sharing methods efficiently collect experience while assuming that memory-sharing agents do the same, similar, or cooperative tasks. 
Multifaceted use of shared memory (i.e., obtaining information from experiences of other agents performing completely different tasks) is not supported.

The memory sharing (MS) framework~\cite{gao_memory_2024} proposes to share experience on a single experience pool to support cross-task knowledge exchange. 
Agent experience, which is a question and answer pair, is scored according to pre-defined criteria and the pair with the score greater than the threshold is stored in the pool. 
Although the MS framework aims to provide globally useful experience pool, the generalization ability of shared memory is limited due to the experience scoring by the global criteria. 
Experience needs to be evaluated by task-specific criteria to be useful for the task.

\section{Formalization of the Paradigms} \label{sec:formalize}

In this section, we formally and succinctly characterize the paradigms (i) \textbf{Store Then ON-demand Extract: STONE}, (ii) \textbf{deeper insight discovery}, and (iii) \textbf{experience memory sharing}, highlighting how each differs from conventional, widely adopted paradigms.

\subsection{STONE vs. extract then store}

Both paradigms determine how experience acquired during a task is transformed into persistent memory used to support future tasks.

Consider an agent performing a sequence of tasks $\mathcal{T}_1$, $\mathcal{T}_2$, $\cdots$.
When the agent executes task $\mathcal{T}$, it obtains an experience $\mathcal{E}$ (for example, inputs, outputs, trajectories, and observations).
Let $\mathcal{S}$ denote the memory storage, which contains a growing set of stored memory entries.
Over time, $\mathcal{S}$ becomes $\mathcal{S} = \{\mathcal{M}_1, \mathcal{M}_2, \cdots\}$, where each $\mathcal{M}_i$ is a stored memory derived from past experiences through task $\mathcal{T}_i$.

\textbf{In the extract-then-store paradigm}, when experience $\mathcal{E}$ is acquired during task $\mathcal{T}$, the agent extracts only information deemed useful for $\mathcal{T}$.
Let $f_\mathcal{T}(\mathcal{E})$ be the extraction function for task $\mathcal{T}$, and this is typically executed by an LLM.
The stored memory entry is
\begin{equation}
\mathcal{M} = f_\mathcal{T}(\mathcal{E}),
\end{equation}
and the memory storage is updated as 
\begin{equation}
\mathcal{S} \leftarrow \mathcal{S} \cup {M}.
\end{equation}
During a later task $\mathcal{T}'$, the agent retrieves the entries relevant to $\mathcal{T}'$ from $\mathcal{S}$ and uses them for inference.

\textbf{In the STONE paradigm}, the agent stores the full experience without task-specific filtering.
That is,
\begin{equation}
\mathcal{M} = \mathcal{E},
\end{equation}
and
\begin{equation}\label{eq:stone}
\mathcal{S} \leftarrow \mathcal{S} \cup \mathcal{M} (= \mathcal{S} \cup {\mathcal{E}}).
\end{equation}
When a new task $\mathcal{T}'$ is encountered, relevant stored experiences are retrieved from $\mathcal{S}$, and the extraction function $f_{T'}$ is applied at retrieval time to obtain the task-useful information. Again, $f_{T'}$ is typically executed by an LLM.

The key limitation of extract-then-store paradigm arises because $f_\mathcal{T}(\mathcal{E})$ preserves only information useful for the current task $\mathcal{T}$.
If $\mathcal{E}$ contains information $x$ that is not useful for $\mathcal{T}$ but is useful for a future task $\mathcal{T}'$, then $x$ is lost whenever $x$ is not included in $f_\mathcal{T}(\mathcal{E})$.
In contrast, STONE stores the entire experience $\mathcal{E}$ regardless of its immediate utility, thereby eliminating the risk of discarding information that may become valuable for future tasks.

Besides, under the requirement that no information potentially useful for any future task may be lost, the STONE paradigm achieves the smaller possible memory size than the extract-then-store paradigm.
The proof of this statement is in appendix~\ref{sec:appendix_a}.

\subsection{Deeper insight discovery vs. simple experience replay} \label{sec:did-vs-ser}

Given a task $\mathcal{T}$, the agent retrieves information from $\mathcal{S}$ and generates an action.

\subsubsection*{Simple experience replay}

The baseline approach retrieves a single memory entry most relevant to $\mathcal{T}$:
\begin{equation}
\mathcal{M}^{*} = \arg\max_{\mathcal{M} \in \mathcal{S}} \mathrm{Rel}(\mathcal{M},\mathcal{T}),
\end{equation}
where $\mathrm{Rel}(\cdot)$ is a relevance scoring function.
The agent then extracts useful information $\mathcal{I}$ for task $\mathcal{T}$ from $\mathcal{M}^{*}$:
\begin{equation}
\mathcal{I} = f_\mathcal{T}({\mathcal{M}^{*}}).
\end{equation}
Then, the agent generates an action for task $\mathcal{T}$ using $\mathcal{I}$.
Because only one experience is used, the behavior may reflect noisy or atypical past events.

\subsubsection*{Deeper insight discovery}

The deeper insight method retrieves all memory entries with relevance above a threshold:
\begin{equation}\label{eq:comprehensive_recall}
\mathcal{M}_{\mathcal{T}} = \{\, \mathcal{M} \in \mathcal{S} \mid |f_{\mathcal{T}}(\mathcal{M})| > 0 \,\},
\end{equation}
where $|f_{\mathcal{T}}(\mathcal{M})| > 0$ means $\mathcal{M}$ has some useful information for task $\mathcal{T}$, and aggregates information across them:
\begin{equation}
\mathcal{I} = \mathrm{Discover}_{\mathcal{T}}\bigl(\{f_{\mathcal{T}}(\mathcal{M}) \mid \mathcal{M} \in \mathcal{M}_{\mathcal{T}}\}\bigr),
\end{equation}
where $\mathrm{Discover}(\cdot)$ distills useful information for $\mathcal{T}$ by aggregating information in $\mathcal{M}_{\mathcal{T}}$.
By leveraging multiple related experiences, this method captures underlying statistical structure and supports more reliable decision-making in probabilistic settings.

\subsection{Memory sharing vs. self-collection}

We consider two settings for constructing the memory storage $\mathcal{S}$ containing experiences.

\textbf{In the self-collection setting}, a single agent $A$ accumulates its own experiences.  
Let $\mathcal{M}_{A}$ denote the set of experiences collected by agent $A$.  
The memory storage is defined as
\begin{equation}
\mathcal{S} = \mathcal{M}_{A}.
\end{equation}
Thus, the diversity and volume of stored memory entries are limited by the trajectory of a single agent.

\textbf{In the memory-sharing setting}, multiple agents $\{A_{1}, \ldots, A_{K}\}$ contribute to a shared memory storage.
Let $\mathcal{M}_{i}$ denote the experiences collected by agent $A_{i}$.  
The shared memory storage is given by
\begin{equation}
\mathcal{S} = \bigcup_{i=1}^{K} \mathcal{M}_{i}.
\end{equation}
Because experience acquisition is distributed across $K$ agents, this setting provides a significantly higher rate of memory accumulation and greater experiential diversity.

\section{Simple Experiments Demonstrating the Superiority of the Highlighted Approaches} \label{sec:experiments}

\subsection{STONE}

\begin{figure}
  \centering
  \begin{minipage}[b]{0.45\columnwidth}
    \centering
    \begin{algorithm}[H]
      \caption{STONE}
      \begin{algorithmic}
        \STATE $\mathrm{budget} \leftarrow 100$, $\mathrm{CORRECT} \leftarrow 0$, $\mathcal{M} \leftarrow \emptyset $
        \FOR{$q \in \mathrm{Questions}$}
        \WHILE{$\mathrm{TRUE}$}
        \STATE $\mathcal{E}_{q} \leftarrow s_\mathcal{S}{(q)}$ \textit{\# search}
        \STATE $\mathcal{M}_{q} \leftarrow f_{q}{(\mathcal{E}_{q})}$ \emph{\textit{\# extract}}
        \IF{$\mathrm{budget} < 0$ or sufficient information $\in \mathcal{M}_{q}$}
        \STATE $\mathrm{answer} \leftarrow g(q | \mathcal{M}_{q})$ \textit{\# generate answer}
        \STATE break
        \ELSE
        \STATE $\mathrm{budget} \leftarrow \mathrm{budget} - 1$
        \STATE $\mathcal{E}_{q} \leftarrow \mathrm{retrieve}_{\mathrm{ext}}(q)$
        \STATE 
        \STATE $\mathcal{S} \leftarrow \mathcal{S} \cup \mathcal{E}_{q}$ \emph{\textit{\# store}}
        \ENDIF
        \ENDWHILE
        \IF {answer is correct}
        \STATE $\mathrm{CORRECT} \leftarrow \mathrm{CORRECT} + 1$
        \ENDIF
        \ENDFOR
      \end{algorithmic}
    \end{algorithm}
  \end{minipage}
  \hspace{0.04\columnwidth}
  \begin{minipage}[b]{0.45\columnwidth}
    \centering
    \begin{algorithm}[H]
      \caption{Extract then store}
      \begin{algorithmic}
        \STATE $\mathrm{budget} \leftarrow 100$, $\mathrm{CORRECT} \leftarrow 0$, $\mathcal{M} \leftarrow \emptyset $
        \FOR{$q \in \mathrm{Questions}$}
        \WHILE{$\mathrm{TRUE}$}
        \STATE $\mathcal{M}_{q} \leftarrow s_\mathcal{S}{(q)}$ \textit{\# search}
        \STATE 
        \IF{$\mathrm{budget} < 0$ or sufficient information $\in \mathcal{M}_{q}$}
        \STATE $\mathrm{answer} \leftarrow g(q | \mathcal{M}_{q})$ \textit{\# generate answer}
        \STATE break
        \ELSE
        \STATE $\mathrm{budget} \leftarrow \mathrm{budget} - 1$
        \STATE $\mathcal{E}_{q} \leftarrow \mathrm{retrieve}_{\mathrm{ext}}(q)$
        \STATE $\mathcal{M}_{q} \leftarrow f_{q}{(\mathcal{E}_{q})}$ \emph{\textit{\# extract}}
        \STATE $\mathcal{S} \leftarrow \mathcal{S} \cup \mathcal{M}_{q}$ \emph{\textit{\# store}}
        \ENDIF
        \ENDWHILE
        \IF {answer is correct}
        \STATE $\mathrm{CORRECT} \leftarrow \mathrm{CORRECT} + 1$
        \ENDIF
        \ENDFOR
      \end{algorithmic}
    \end{algorithm}    
  \end{minipage}
  \caption{Pseudo-algorithms for the simplified experiment. $s_\mathcal{S}(q)$ retrieves the memory entry relevant to $q$ from storage $\mathcal{S}$. $f_{q}{(\cdot)}$ extracts information relevant to $q$. $g(q | \cdot)$ generates an answer to $q$ by referring to given information. $\mathrm{retrieve}_{\mathrm{ext}}(q)$ retrieves documents relevant to $q$ from the external database in the simplified experiment setting. More generally, this process corresponds to acquiring new experience.} \label{fig:exp_stone_alg}
\end{figure}

\begin{figure}
  \centering
  \begin{minipage}[b]{0.6\columnwidth}
    \centering
      \includegraphics[width=\columnwidth]{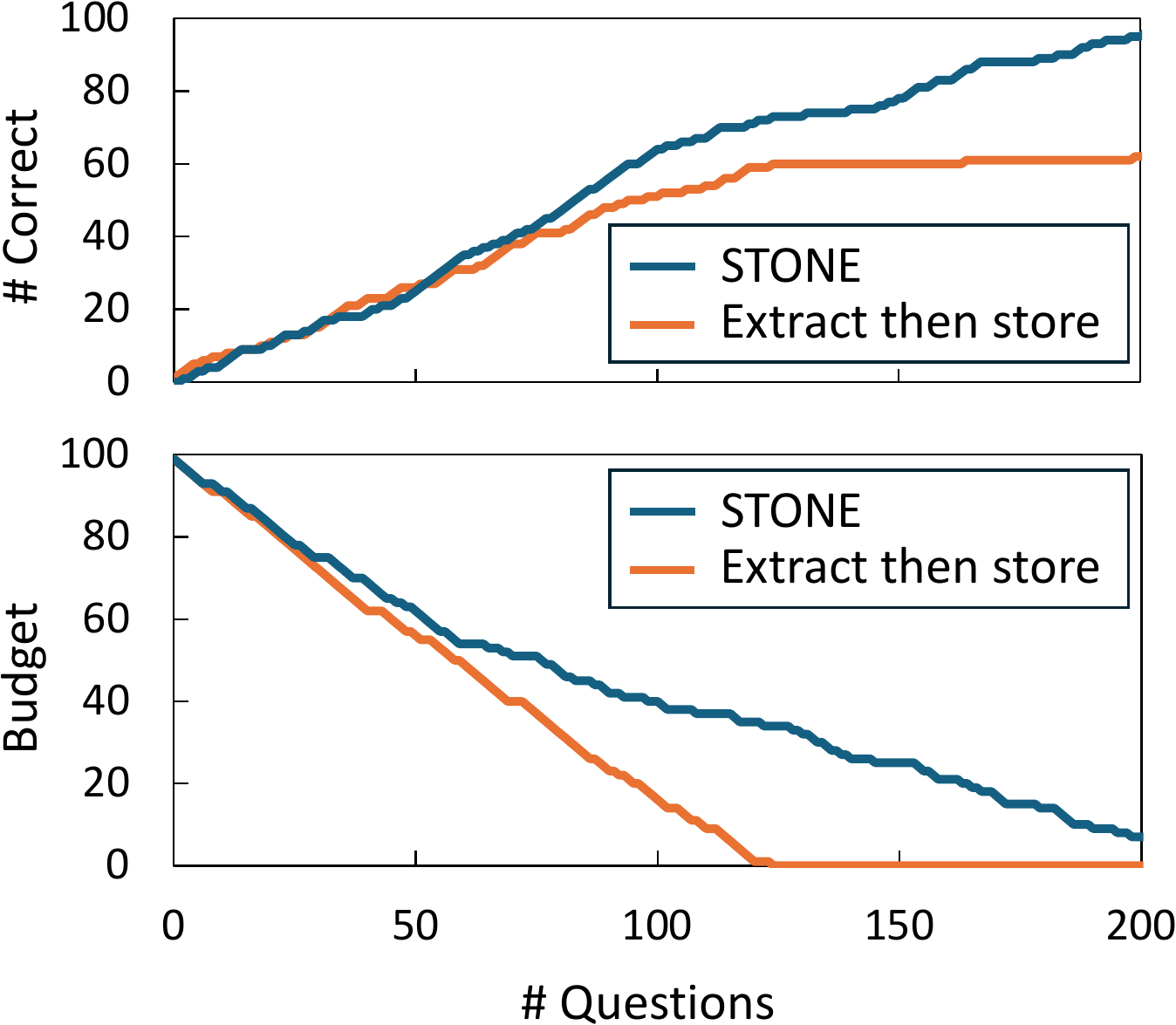}
  \end{minipage}
  \caption{STONE (store then on-demand extract) vs. extract then store.} \label{fig:exp_stone}
\end{figure}

Conventional approaches employing the ``Extract Then Store'' paradigm only retain information deemed relevant to the task at the time of experience acquisition. Consequently, information potentially useful for future tasks is often discarded, limiting the system's capability for flexible knowledge reuse.
In contrast, we highlight the advantage of the ``Store Then On-demand Extract'' (STONE) paradigm, in which the agent preserves the raw experience (e.g., an entire document) and extracts task-relevant information as needed during task execution. This approach prevents information loss and facilitates more versatile utilization of previously acquired knowledge.

To empirically demonstrate the benefits of STONE, we conducted a question answering task concerning internal company regulations.
Multiple documents detailing company policies are provided.
For each document, we generated several questions and corresponding correct answers using a \texttt{gpt-oss-120b}, ensuring that each question could be accurately answered by referencing its respective document.
The agent is requested to solve these questions in random order.
In this scenario, accessing the external database of internal regulations to retrieve a document is considered an ``experience.'' The agent's retrieval budget is limited to 100 experiences.

\textbf{Extract Then Store}: For each question, the agent first searches its memory. If sufficient information is unavailable, it retrieves the relevant document from the external database, at the cost of one experience. However, it only stores information that is directly necessary to answer the current question; all other information is discarded. As a result, if subsequent questions require other information from the same document, the agent must retrieve it again, rapidly depleting its retrieval budget.

\textbf{STONE}: Whenever the agent accesses an external document, it stores the entire document as-is. This allows the agent to answer later questions using previously retrieved documents, thereby reducing redundant retrievals and conserving its budget.

The detailed experiment setting is illustrated in Figure~\ref{fig:exp_stone_alg}.
The top plot of Figure~\ref{fig:exp_stone} shows the agent's cumulative correct answers against the number of questions, while the bottom of Figure~\ref{fig:exp_stone} shows the rate at which the retrieval budget is consumed. ``Extract Then Store'' methods exhaust the retrieval budget more quickly. Once depleted, the agent must rely exclusively on its limited memory, which significantly impairs performance on subsequent tasks. In contrast, STONE enables more efficient budget usage and prolonged learning, allowing the agent to continue answering questions either using stored documents or by retrieving new ones as necessary.
These results demonstrate that STONE enables multifaceted and flexible use of stored experiences.

\subsection{Deeper insight discovery}
\begin{figure}
  \centering
  \begin{minipage}[b]{0.49\columnwidth}
    \centering
    \includegraphics[width=0.9\columnwidth]{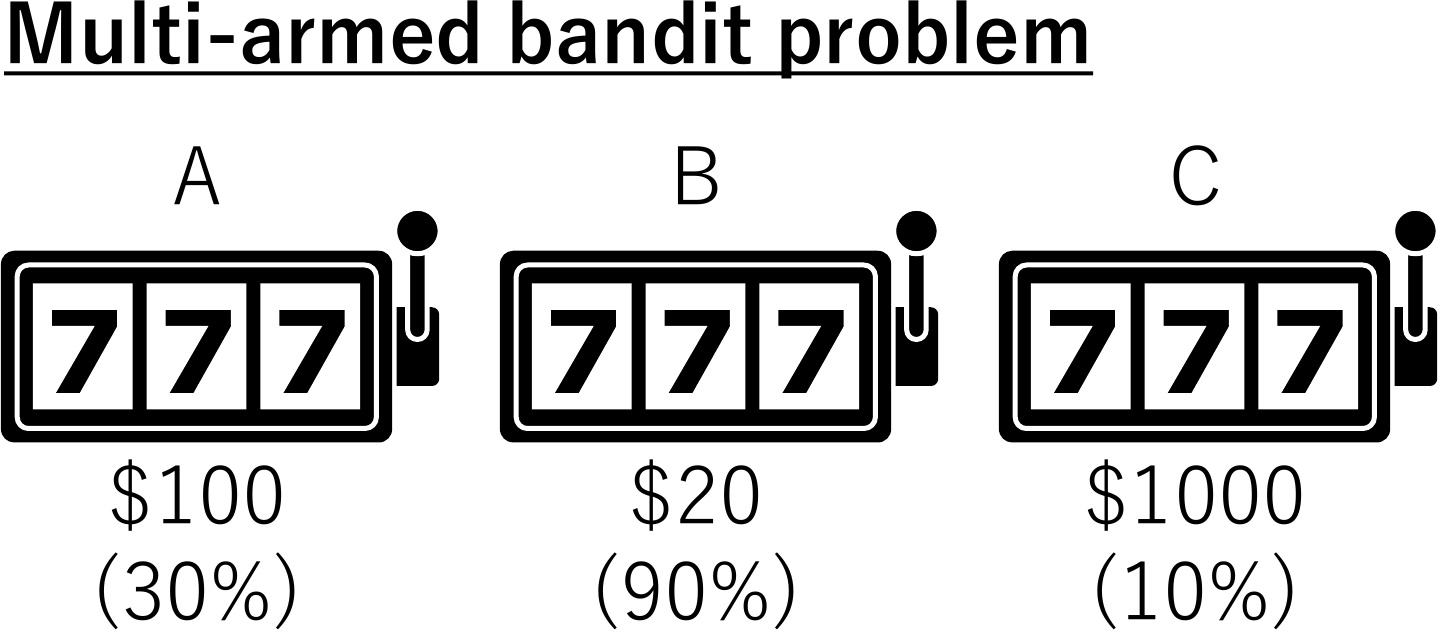}
    \subcaption{Settings.}\label{fig:exp_deep_settings}
  \end{minipage}
  \begin{minipage}[b]{0.49\columnwidth}
    \centering
    \includegraphics[width=0.9\columnwidth]{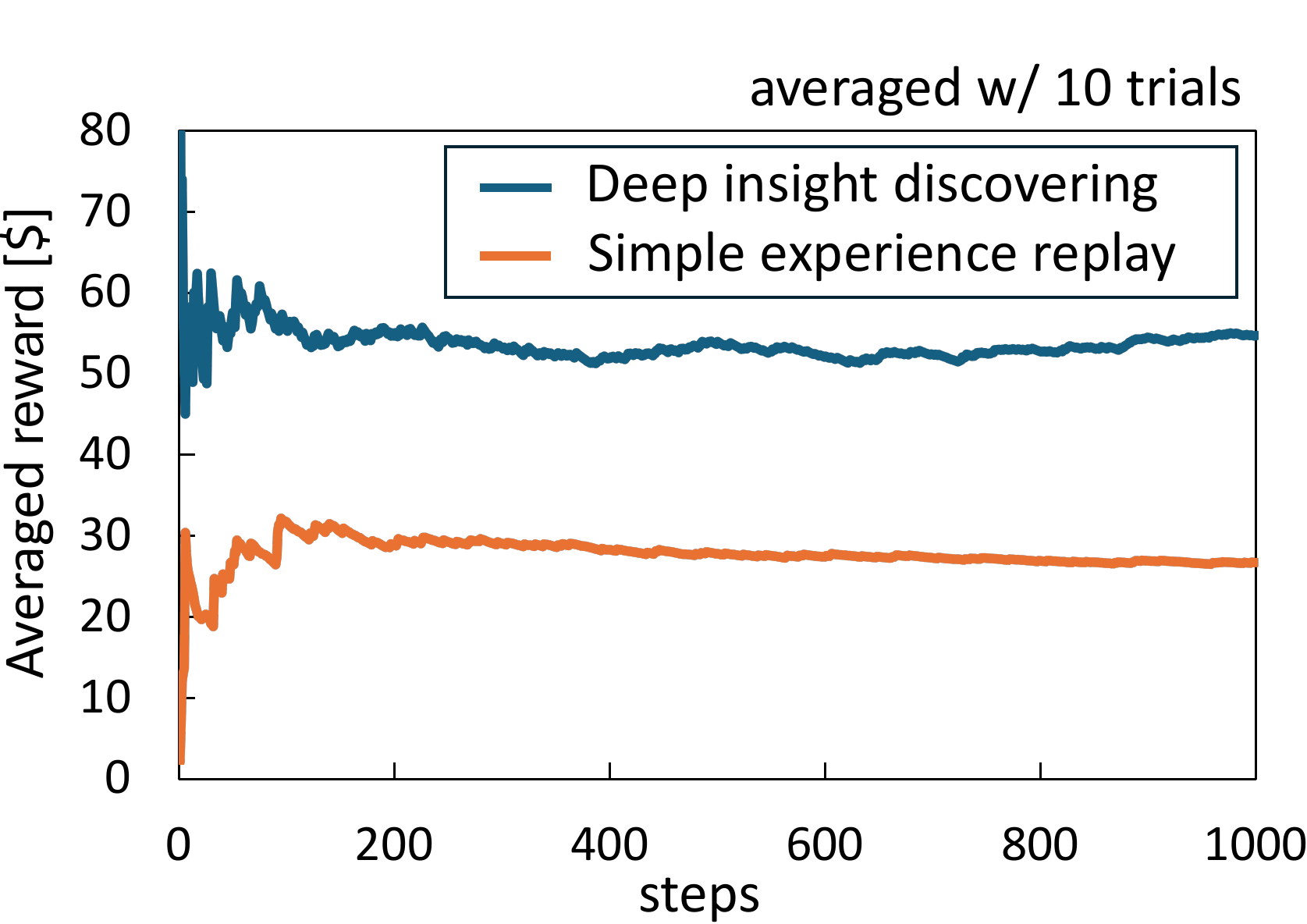}
    \subcaption{Rewards for each step.}\label{fig:exp_deep_res}
  \end{minipage}
  \caption{Deeper insight discovery vs. simple experience replay.} \label{fig:exp_deep}
\end{figure}

To demonstrate the superiority of ``deeper insight discovery'' over simple experience replay, we designed a straightforward experiment using the multi-armed bandit problem---a commonly used benchmark in reinforcement learning. In this task, as illustrated in Figure~\ref{fig:exp_deep_settings}, three arms are assigned different probabilities of success and rewards, with the objective to maximize cumulative reward.

When applying simple experience replay to this setting, an agent may select the same arm again after a success, and switch arms following a failure. However, as noted in standard reinforcement learning literature, many approaches outperform such naive strategies.

Here, we adopted the $\varepsilon$-greedy policy, which can be easily applied in the context of experience utilization. The agent estimates the expected value of each arm based on observed outcomes, selecting the arm with the highest estimated value with probability $1-\varepsilon$, and otherwise selecting randomly among the remaining arms.

Figure~\ref{fig:exp_deep} presents the average reward over ten trials, clearly showing that the $\varepsilon$-greedy policy achieves substantially higher rewards compared to simple experience replay. While this result may appear trivial, it strongly suggests that statistical processing and leveraging a broader range of experiences---as embodied by \textbf{deeper insight discovery}---offers significant potential for performance improvement over conventional approaches focused solely on repeating successful experiences or avoiding failed ones.

\subsection{Experience memory sharing}
\begin{figure}
  \centering
  \begin{minipage}[b]{0.49\columnwidth}
    \centering
    \includegraphics[width=0.9\columnwidth]{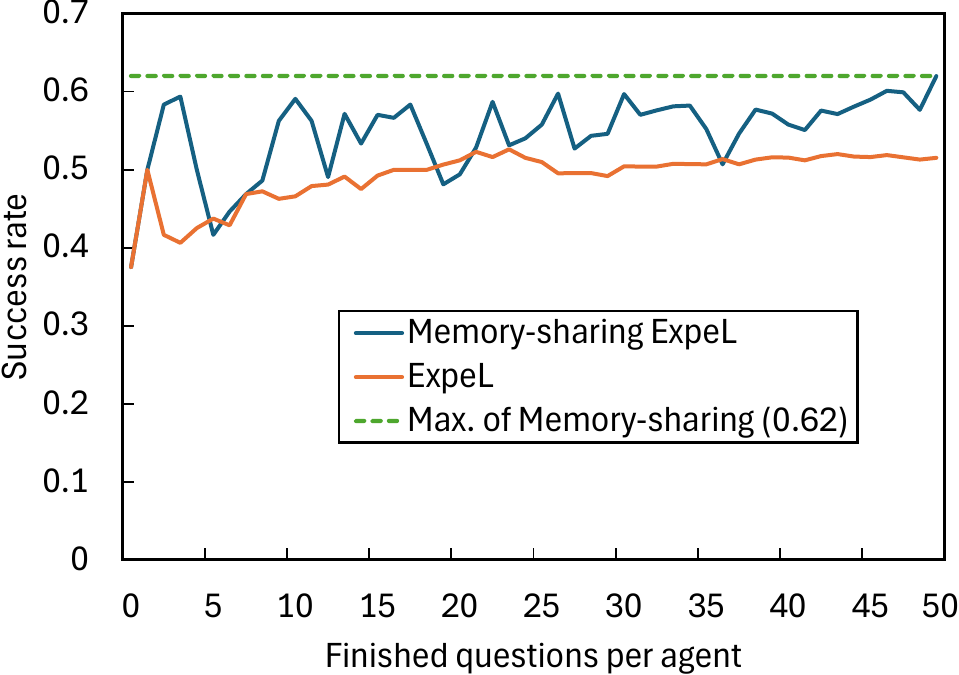}
    \subcaption{Success rate on 1--50 questions.}\label{fig:memsh-expel-50}
  \end{minipage}
  \begin{minipage}[b]{0.49\columnwidth}
    \centering
    \includegraphics[width=0.9\columnwidth]{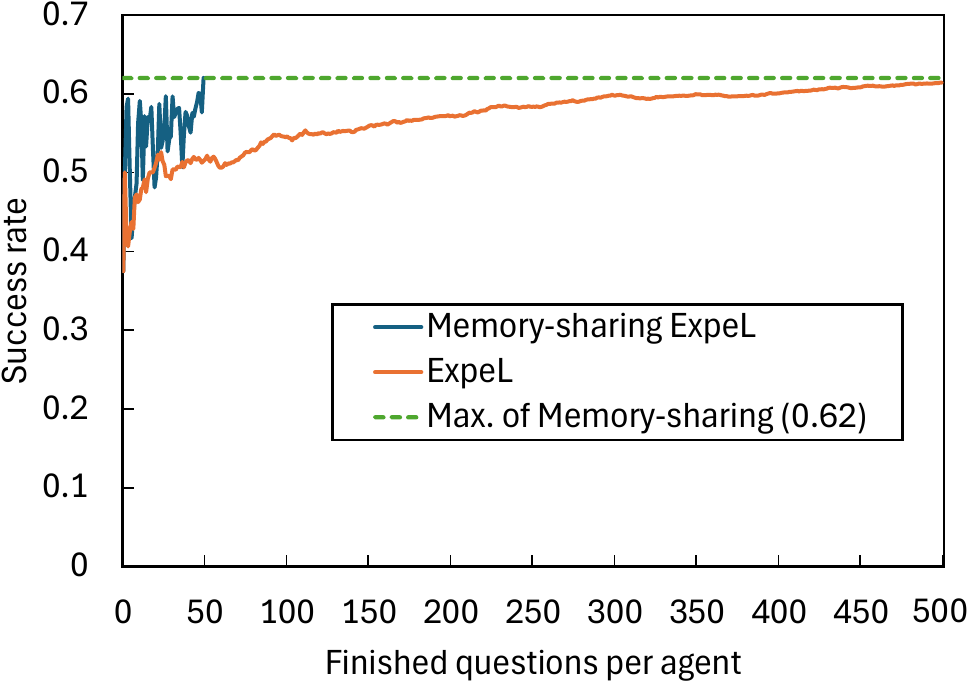}
    \subcaption{Success rate on 1--500 questions.}\label{fig:memsh-expel-500}
  \end{minipage}
  \caption{With memory-sharing vs. without memory-sharing.} \label{fig:memsh-expel}
\end{figure}

In the memory-sharing platform, a large volume of experience data can be collected efficiently because multiple AI agents can simultaneously store their experiences.
We demonstrate the efficiency of memory sharing by expanding the ExpeL~\cite{zhao_expel_2024} agent's functionality to share its trajectories and extracted rules.
We conducted an experiment in which an agent processed HotpotQA questions while its success rate was measured after each question. 
The success rate is defined as the number of succeeded questions divided by the total number of previously answered questions. 
Figure~\ref{fig:memsh-expel-50} and Figure~\ref{fig:memsh-expel-500} show the number of finished questions plotted against the success rates on 1--50 questions (an enlargement of Figure~\ref{fig:memsh-expel-500}) and 1--500 questions, respectively. For ``Memory-sharing ExpeL'', ten agents processed 50 questions in the individually shuffled orders while sharing success trajectories of similar questions and rules extracted from trajectories of the agents. 
At each round, the ten agents processed one question in a shuffled order, and this was repeated for 50 rounds. 
The success rate of the last agent at each round is plotted, with the last agent having access to the memory of all previous experiences as well as all other agents in the current round. 
This is a fair setting to ``ExpeL'' experiments because an ExpeL agent can always access all the memory of previous experiences.
For ``ExpeL'', one agent processed 500 questions in the interleaved order of the ten shuffles used in the ``Memory-sharing ExpeL'' experiment. 
For ``Memory-sharing ExpeL'' and ``ExpeL'', the average values over eight times of the same experiments are shown in Figure~\ref{fig:memsh-expel}. 
The success rate of the memory-sharing agents reached 0.62 when they processed 50 questions per agent (Figure~\ref{fig:memsh-expel-50}). 
On the other hand, the ExpeL agent required processing all the 500 questions to reach the success rate 0.62 (Figure~\ref{fig:memsh-expel-500}). 
The ratio of required 500 questions to 50 questions was the same as that of ten memory-sharing agents to the one ExpeL agent. 
This result indicates that memory-sharing capability reduces the burden of trial-and-error per agent while trajectories are efficiently collected.

\section{Challenges} \label{sec:challenges}

In this section, we discuss several challenges that may have hindered the adoption of \textbf{Store Then ON-demand Extract: STONE}, \textbf{deeper insight discovery}, and \textbf{experience memory sharing} despite their potential usefulness, and that may help explain why they have remained underexplored.

\subsection{Storage capacity}
In STONE, experiences are stored in raw form, as shown in Eq.~(\ref{eq:stone}).
Under the requirement that no information potentially useful for any future task may be lost, this strategy minimizes memory usage (appendix~\ref{sec:appendix_a}); nevertheless, compared with methods that store only information fragments relevant to the current task, it results in substantially larger storage demands.
A promising direction is the lossless compression of experiences and the elimination of redundancies with previously stored memory.
However, what we emphasize most is increasing the density and capacity of the storage devices themselves.
In this direction, for example, 3D-flash memory with a bit density of $37.6~\mathrm{Gb}/\mathrm{mm}^{2}$~\cite{thimmaiah_bics_2026}, and SSDs with capacities of 245.76 TB in both a standard 2.5-inch and EFSFF E3.L form factors~\cite{kioxia_ssd_2025} have been demonstrated.
Note that memory sharing schemes also help reduce the overall storage demand by avoiding redundant storage for experiences that can be leveraged by multiple AIs.

\subsection{Inference performance}
In STONE, a major concern is a significant runtime overhead, as useful information must be extracted from raw experiences during task execution.
This extraction is typically performed by prompting an LLM with the raw experience, the current task description, and an instruction to extract task-relevant information.
Because the computational cost of LLM inference generally scales quadratically with the context length, longer inputs lead to substantially higher latency.
Storing raw experiences as memory entries therefore results in very large contexts, further exacerbating this cost.

A possible approach to accelerate insight extraction is utilizing KV (key value)-cache enabling LLMs to avoid long experience data processing at runtime. 
KV-cache is a store of key-value expressions of tokens processed by an LLM~\cite{luohe_keep_2024}. 
Reuse of KV-cache of long experience data allows the LLM to start from states of being aware of the long experience without reprocessing the experience.
Different from summarization and abstraction that are suitable to human memory, KV-cache retains all information in processed tokens, and we argue that KV-cache is the optimal memory form for AI. 
While current primary target of KV-cache utilization is long session and multi-turn conversations for chat service~\cite{nvidia_motivation_2025}, we need to research and develop algorithms and software on this technology optimized for STONE paradigm.
For both of them, the development of ultra-high IOPS SSD~\cite{kioxia_ultra_high_iops_2025} is expected to significantly broaden its scope of application and enhance its effectiveness.

\subsection{Comprehensive recall}
For deep insight discovery, it is required to reliably recall all relevant experiences that have useful information for the current task (Eq. \ref{eq:comprehensive_recall}). 
Similarity search using dense vectors (approximate nearest neighbor search: ANNS) is popularly used by AI agents (e.g., agentic RAG~\cite{singh_agentic_2025}) to find useful experiences in the database. 
However, ANNS is not appropriate for this comprehensive recall~\cite{weller_theoretical_2025}, because ANNS is designed to retrieve the most similar items.

Semantic logical search techniques using sparse vector format can be a fundamental technology of comprehensive recall because of its higher interpretability of data~\cite{borges_kale_2023,park_decoding_2025}.
Furthermore, software or devices suitable for large-scale logical search using sparse vector representations need to be developed. While research and development on software and hardware for dense vector search has been highly active (e.g.,~\cite{tatsuno_aisaq_2025, yiwei_ansmet_2025}), we hope to see similar progress for logical search based on sparse vector formats.

\subsection{Machine learning to discover deeper insight}
As described in Section~\ref{sec:did-vs-ser}, a discovery function ($\mathrm{Discover}(\cdot)$) is required to capture the statistical structure in large amounts of experience in stochastic environments and obtain useful deeper insights.
This is most reasonably achieved through machine learning.
Specifically, a Q-function that derives the return $G$ from the current state $s$ (and the past history $h$, if necessary) and the subsequent action $a$ becomes the deeper insight.
However, for LLM-based agents, the open-ended language action space makes the probability of revisiting exactly the same action extremely low; as a result, the reward variance for each action becomes large, leading to unstable learning~\cite{yang2025ariatraininglanguageagents}.
Additionally, as a pre-processing step, experiences must be vectorized using a pretrained embedding model, but there is also concern that the performance of the embedding model may become a bottleneck for the overall process.
Furthermore, even if the Q-function can be estimated accurately, a mechanism is required to ensure that the LLM agent reliably takes actions that follow the Q-function.
These challenges must be addressed in order to enable machine learning-based deeper insight discovery.

\subsection{Infrastructures for memory sharing platform}
For effective memory sharing and transfer, Cheng et al.~\cite{yihua_cheng_large_2024} proposed the concept of Knowledge Delivery Networks (KDN), which deploy distributed KV-cache storage across the Internet, drawing an analogy to Content Delivery Networks (CDN).  
Such an approach highlights the need for scalable and globally accessible infrastructures to support large-scale memory platforms, enabling efficient retrieval, low-latency access, and seamless sharing of learned knowledge across multiple agents and systems.
In addition to globally accessible infrastructures such as KDN, local infrastructures (i.e., edge computing) supporting memory platforms are also necessary for experience sharing in limited space.

\subsection{Security and privacy}
Memory sharing inherently raises concerns regarding privacy and security.
Experience may contain personal or sensitive information whose disclosure should be strictly limited.
The shared memory pool may be contaminated by the introduction of misinformation or harmful data, including by malicious users.
Compliance with relevant regulations, such as the General Data Protection Regulation (GDPR), must be carefully considered for practical deployment.
Research on memory sharing that addresses security and privacy concerns is beginning to emerge~\cite{alireza_security_2025}.
But as with this paper, most studies on memory sharing tend to treat security and privacy as future work or limitations.
However, moving forward, it will be essential to develop memory-sharing approaches that thoroughly address security and privacy concerns.

\section{Conclusion} \label{sec:conclusion}
In this paper, we have identified three under-explored yet highly promising directions for advancing self-evolving AI agents equipped with external memory. Furthermore, we showed the effectiveness of the directions using simple experiments.
(i)~Store-Then-ON-demand-Extract (STONE): By persisting raw experiences instead of pre-selected extracts, STONE preserves latent information that can be leveraged flexibly across heterogeneous future tasks, thus overcoming the inherent lossiness of ``extract-then-store'' pipelines.
(ii)~Deeper insight discovery: We introduced a process to extract or learn useful information from many experiences, rather than from a single experience. This mitigates inter-context conflicts in non-deterministic settings and guides the agent toward more reliable decision-making.
(iii)~Experience Memory Sharing: We demonstrated that pooled experience repositories enable agents to amortize collection costs.
We also identify key technical challenges, including storage capacity, comprehensive recall, security, etc.
We anticipate that continued research along these lines will pave the way for AI agents that continuously and adaptively improve themselves, broadening the scope of intelligent behavior in complex, dynamic environments.

\bibliographystyle{unsrt}  
\bibliography{bib-stoe-arxiv}  

\appendix 
\section{Memory size of STONE is minimal.}\label{sec:appendix_a}

\theoremstyle{plain}
\newtheorem{thm}{Theorem}
\begin{thm}
Under the requirement that information potentially useful for any future task may not be lost and the assumption that the memory size of extracted information is the same to that of original experience, the STONE paradigm achieves the smallest possible memory size among all extract-then-store paradigms.
\end{thm}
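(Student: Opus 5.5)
The plan is to formalize ``no information potentially useful for any future task is lost'' and then compare memory sizes entry by entry. Model an experience $\mathcal{E}$ as a finite collection of information units $\mathcal{E} = \{x_1, \dots, x_n\}$. Write $U(\mathcal{E}) \subseteq \mathcal{E}$ for the set of units that are useful for at least one possible future task, i.e.\ $U(\mathcal{E}) = \bigcup_{\mathcal{T}'} \{x \in \mathcal{E} \mid x \text{ useful for } \mathcal{T}'\}$. Since the future task sequence $\mathcal{T}_1, \mathcal{T}_2, \dots$ is unknown when $\mathcal{E}$ is acquired, and the task space is arbitrary, every unit must be regarded as potentially useful. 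Concretely, for each $x \in \mathcal{E}$ one can exhibit a task $\mathcal{T}'_x$ (e.g.\ ``report $x$'') for which $x$ is useful, so $U(\mathcal{E}) = \mathcal{E}$. This step is the main obstacle. It is where ``potentially useful for any future task'' must be read as a universal quantifier over admissible tasks, and I would state this reading explicitly as the interpretation of the requirement.

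Next, I would characterize the admissible extract-then-store schemes. Any extraction function $f_{\mathcal{T}}$ that satisfies the requirement must produce an $\mathcal{M} = f_{\mathcal{T}}(\mathcal{E})$ from which every $x \in U(\mathcal{E}) = \mathcal{E}$ can be recovered. In other words, the information content of $\mathcal{M}$ must contain that of $\mathcal{E}$. Now invoke the theorem's assumption that extracted information occupies the same memory size as the corresponding original experience content. This gives, for each stored entry,
\begin{equation*}
|\mathcal{M}| = |f_{\mathcal{T}}(\mathcal{E})| \ \ge\ |U(\mathcal{E})| = |\mathcal{E}|,
\end{equation*}
since storing information equivalent to all of $\mathcal{E}$ costs at least what $\mathcal{E}$ costs. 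Any additional derived content (insights, summaries) that an extractor might append only increases the size.

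Finally, for STONE we have $\mathcal{M} = \mathcal{E}$ by Eq.~(\ref{eq:stone}), so its entry size is exactly $|\mathcal{E}|$. This meets the lower bound, and STONE trivially satisfies the no-loss requirement. Summing over the entries $\mathcal{M}_1, \mathcal{M}_2, \dots$ of $\mathcal{S}$ then shows that the total storage of STONE is at most that of any admissible extract-then-store scheme. Equality holds exactly when $f_{\mathcal{T}}$ is itself size-preserving and lossless, in which case the scheme is effectively STONE. This proves the theorem.
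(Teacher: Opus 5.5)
Your argument covers only extract-then-store schemes that keep a \emph{single} extract $f_{\mathcal{T}}(\mathcal{E})$ per experience. In that model the no-loss requirement forces the one extract to contain all of $\mathcal{E}$, so the bound is immediate. The paper's proof treats a more general scheme: the stored memory is a tuple $\mathcal{M}_{\mathrm{ETS}} = (f_{\mathcal{T}_1}(\mathcal{E}),\ldots,f_{\mathcal{T}_k}(\mathcal{E}))$ of extracts for several future tasks, each with $f_{\mathcal{T}_i}(\mathcal{E}) \subseteq \mathcal{E}$ and typically strictly smaller than $\mathcal{E}$. In that setting your key step fails. That step says every admissible extract must itself be lossless, so that $|f_{\mathcal{T}}(\mathcal{E})| \ge |\mathcal{E}|$ ``for each stored entry.'' But the zero-loss requirement can be met \emph{jointly} by several partial extracts, none of which contains all of $\mathcal{E}$. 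Your equality characterization fails for the same reason. Splitting $\mathcal{E}$ into disjoint task-specific pieces attains total size exactly $|\mathcal{E}|$, even though no single extract is lossless and the scheme is not ``effectively STONE'' entry by entry.

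The missing idea is the covering-with-overlap count that the paper uses. You correctly observe that no $x \in \mathcal{E}$ can be ruled irrelevant to every future task; together with zero loss, this requires $\bigcup_{i} f_{\mathcal{T}_i}(\mathcal{E}) = \mathcal{E}$. The tuple stores each extract separately, and by the size assumption an extracted copy of $x$ costs as much as $x$ itself. Overlaps between $f_{\mathcal{T}_i}(\mathcal{E})$ and $f_{\mathcal{T}_j}(\mathcal{E})$ are therefore paid for repeatedly, giving
\[
|\mathcal{M}_{\mathrm{ETS}}| = \sum_{i=1}^{k} |f_{\mathcal{T}_i}(\mathcal{E})| \ \ge\ \Bigl|\bigcup_{i=1}^{k} f_{\mathcal{T}_i}(\mathcal{E})\Bigr| = |\mathcal{E}| = |\mathcal{M}_{\mathrm{STONE}}|.
\]
You can repair your proof by applying your recoverability principle to the whole collection of extracts derived from $\mathcal{E}$, not to each entry individually. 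The rest of your argument then goes through: the ``report $x$'' justification and the comparison with $\mathcal{M} = \mathcal{E}$ for STONE.
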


\begin{proof}
Let $\mathcal{E}$ be the set of acquired experience, and let
$f_{\mathcal{T}_{i}}(\mathcal{E}) \subseteq \mathcal{E}(1\le i\le k)$ denote the extracted useful information for the future tasks $\mathcal{T}_{1},\cdots,\mathcal{T}_k$.
Their memory sizes satisfy $|f_{\mathcal{T}_{i}}(\mathcal{E})|\le |\mathcal{E}|$ often with strict inequality.

In STONE paradigm, memory $\mathcal{M}_{\mathrm{STONE}}$ is all experience:
\[
\mathcal{M}_{\mathrm{STONE}} = \mathcal{E},
\qquad
|\mathcal{M}_{\mathrm{STONE}}| = |\mathcal{E}|.
\]

In extract-then-store (ETS) paradigms, memory $\mathcal{M}_{\mathrm{ETS}}$ is the tuple of extracted information for the tasks:
\[
\mathcal{M}_{\mathrm{ETS}}
= (f_{\mathcal{T}_{1}}(\mathcal{E}),\cdots,f_{\mathcal{T}_{k}}(\mathcal{E})).
\]

The zero future-loss requirement demands that every
$x\in\mathcal{E}$ that could be useful for any future task must be stored because no $x \in \mathcal{E}$ can be guaranteed irrelevant to all future tasks. $\mathcal{M}_{\mathrm{ETS}}=(f_{\mathcal{T}_{1}}(\mathcal{E}),\cdots,f_{\mathcal{T}_{k}}(\mathcal{E}))$ includes information corresponding to any $x\in\mathcal{E}$ while information in $f_{\mathcal{T}_{i}}(\mathcal{E})$ and $f_{\mathcal{T}_{j}}(\mathcal{E})$ for different tasks $\mathcal{T}_i$ and $\mathcal{T}_j(i\neq j,1\le i,j\le k)$ can be overlapped and redundant. Thus, the memory size $|\mathcal{M}_{\mathrm{ETS}}|$ satisfies 
\[
|\mathcal{M}_{\mathrm{ETS}}| \ge |\mathcal{E}|.
\]

Hence, STONE achieves the minimum possible memory size:
\[
|\mathcal{M}_{\mathrm{STONE}}| \le |\mathcal{M}_{\mathrm{ETS}}|.
\]

\end{proof}

\end{document}